\theoremstyle{plain}
\newtheorem{theorem}{Theorem}[section]
\newtheorem{proposition}[theorem]{Proposition}
\newtheorem{lemma}[theorem]{Lemma}
\theoremstyle{definition}
\newtheorem{definition}[theorem]{Definition}
\theoremstyle{remark}
\newcommand{\TC}{\texttt{TC}}
\newcommand{\R}{\mathbb{R}}
\newcommand{\st}{\;\middle|\;}
\newcommand{\set}[1]{\mleft\{#1\mright\}}
\newcommand{\mehran}[1]
{\todo[backgroundcolor=blue!20]{\textbf{Mehran:} #1}}
\newcommand{\behnoush}[1]{\todo[backgroundcolor=orange!20]{\textbf{Behnoush:} #1}}
\icmltitlerunning{Parity Requires Unified Input Dependence and Negative Eigenvalues in SSMs} 
\begin{document}

\onecolumn
\icmltitle{Parity Requires Unified Input Dependence and Negative Eigenvalues in SSMs}%\\ 
\icmlsetsymbol{equal}{*}

\begin{icmlauthorlist}
\icmlauthor{Behnoush Khavari}{equal,crl,yyy}
\icmlauthor{Mehran Shakerinava}{equal,yyy,mcguill}
\icmlauthor{Jayesh Khullar}{equal,xxx}
\icmlauthor{Jerry Huang}{equal,crl,yyy,uuu}
\icmlauthor {Fran\c{c}ois Rivest}{yyy,xxx,rmcc} 
\icmlauthor{Siamak Ravanbakhsh}{yyy,mcguill,fff}
\icmlauthor{Sarath Chandar}{crl,yyy,ppp,fff}
\end{icmlauthorlist}

\icmlaffiliation{crl}{Chandar Research Lab}
\icmlaffiliation{yyy}{Mila - Quebec AI Institute}
\icmlaffiliation{uuu}{Universit\'{e} de Montr\'{e}al}
\icmlaffiliation{xxx}{Queens University}
\icmlaffiliation{ppp}{Polytechnique Montr\'eal}
\icmlaffiliation{fff}{CIFAR AI Chair}
\icmlaffiliation{mcguill}{McGill University}
\icmlaffiliation{rmcc}{Royal Military College of Canada}

\icmlcorrespondingauthor{Mehran Shakerinava}{mehran.shakerinava@mail.mcgill.ca}
\icmlcorrespondingauthor{Behnoush Khavari}{khavarib@mila.quebec}
\icmlkeywords{Machine Learning, ICML}

\vskip 0.3in

\printAffiliationsAndNotice{\icmlEqualContribution}

\begin{abstract}
Recent work has shown that LRNN models such as S4D, Mamba, and DeltaNet lack state-tracking capability due to either time-invariant transition matrices or restricted eigenvalue ranges. To address this, input-dependent transition matrices, particularly those that are complex or non-triangular, have been proposed to enhance SSM performance on such tasks. While existing theorems demonstrate that both input-independent and non-negative SSMs are incapable of solving simple state-tracking tasks like parity, regardless of depth, they do not explore whether combining these two types in a multilayer SSM could help. We investigate this question for efficient SSMs with diagonal transition matrices and show that such combinations still fail to solve parity. This implies that a recurrence layer must both be input-dependent and include negative eigenvalues. Our experiments support this conclusion by analyzing an SSM model that combines S4D and Mamba layers.
\end{abstract}

\section{Introduction}

While efficiency and compute have motivated alternatives to Transformers, an equally important concern is understanding the failure modes of different architectures. Addressing these failures is key to designing better models and requires analyzing three aspects: (1) the model’s intrinsic expressive capacity, (2) whether learning dynamics (e.g., gradient descent) can reach solutions within that capacity, and (3) the practical impact of these limitations on real tasks. In this work, we focus on the first aspect, architectural expressivity, and examine structural constraints in state space models (SSMs) that cause them to fail on simple synthetic tasks like parity. Expressivity is crucial because a model may perform well on a particular distribution, but it will fail to generalize to out-of-distribution (OOD) inputs if it cannot represent the correct underlying algorithm.

Examples of Transformers~\cite{liu2023transformers} and linear RNNs~\cite{sarrof2024expressive} failing to generalize to out-of-distribution (OOD) inputs highlight how limited expressivity can lead models to rely on shortcut solutions that do not generalize beyond the training distribution. These failures may stem from practical constraints such as finite-precision computation or other factors.\behnoush{a bit vague. This other factors can be specified, like saying that non-linearities should learn operations like mod, which can break beyond the values that model learned during training.} A particularly illustrative class of tasks where such models are known to fail is state-tracking~\cite{deletang2023neural,hahn2024sensitive,bhattamishra2022simplicity,merrill2024illusion,sarrof2024expressive,grazzi2024unlocking}, a subset of regular languages in formal language theory that includes simple tasks like parity. State tracking tasks are considered representative of a model’s performance on real-world problems, such as code execution and program analysis.%\\
\mehran{I don't think you should ever use line breaks like this (``\textbackslash\textbackslash"). Just start a new paragraph.}

Recent work has examined the failure modes of linear recurrent models, including SSMs and linear Transformers, on state-tracking tasks. Starting with a study by \citet{merrill2024illusion} that highlights this issue in the context of tracking states over sequences of non-solvable group operations, later work by~\citet{sarrof2024expressive,grazzi2024unlocking} reveals that these models are unable to solve even solvable group operation tasks such as parity, due to design limitations in their state transition matrices; that is, these matrices either lack input dependence or have no negative (or complex) eigenvalues, both of which are essential for solving state-tracking tasks. We defer a detailed discussion of these findings to Section~\ref{section:related-work}.\mehran{Reference Section 2.}
\behnoush{Done!
}
Our goal in this paper is to test whether these architectural limitations can be circumvented by combining desirable properties, namely input dependence and the use of negative or complex eigenvalues, in a multi-layer setup. Interestingly, we find that merely layering SSMs with complementary properties (e.g., stacking Mamba\mehran{real/positive Mamba. It might be good to specify this once in the beginning.} and S4D) does not overcome these limitations. Instead, we show, both theoretically and empirically, that to solve even a simple task like parity, a single recurrence layer must simultaneously satisfy both properties.

Our contributions are as follows:\\
1.
We prove that layer-wise combinations of non-negative input-dependent and negative input-invariant SSMs (e.g., Mamba and S4D) cannot solve parity in finite precision, even when skip connections are allowed.\\
2.
We introduce new failure modes for non-negative, input-dependent SSMs and show how they persist across various SSMs.
\\
3.
We provide a constructive example demonstrating how S4D with complex eigenvalues can solve modular counting, contrasting it with the limitations of Mamba~\cite{grazzi2024unlocking}.
\\
4.
We conduct empirical experiments to validate the theory and demonstrate the inability of hybrid models consisting of S4D and Mamba to extrapolate beyond training lengths on the parity task.
\section{Background}
We briefly review structured state space models (SSMs) and provide background on two relevant variants: S4D and Mamba. State space models describe the relationship between an input signal $x(t)$ and output $y(t)$ through a hidden state $h(t)$ using the equations
$
    \dot{h}(t) = Ah(t) + Bx(t),
    y(t) = Ch(t) + Dx(t),
$
where $x(t) \in \R$, $y(t) \in \R$, $A \in \R^{N \times N}$, $B \in \R^{N \times 1}$, $C \in \R^{1 \times N}$, and $D \in \R$.

A discretized version of this model was introduced to the deep learning community by~\citet{gu2021efficiently}, with recurrence and output equations:
$
    h_t = Ah_{t-1} + Bx_t,
    y_t = \sigma(Ch_t + Dx_t),
$
where $\sigma$ denotes an output nonlinearity. Careful initialization of the parameters, especially the transition matrix $A$ (e.g., via HiPPO~\cite{gu2020hippo}), allows these models to mitigate the vanishing gradient problem that affects classical RNNs. Moreover, the structure imposed on the $A$ matrix (normal plus low-rank) makes learning efficient. These innovations led to the Structured State Space Sequential model (S4)~\cite{gu2021efficiently}, which achieved state-of-the-art results on a range of long-range sequence modeling tasks~\cite{tay2021long}, where transformers had previously struggled. As a result, S4 was seen as a promising alternative or complement to attention-based models.

S4 inspired several follow-up models. On the one hand, simpler variants such as DSS~\cite{gupta2022diagonal}, S4D~\cite{gu2022parameterization}, and S5~\cite{smith2023simplified} simplified S4's architecture while retaining strong performance. On the other hand, more sophisticated models such as H3~\cite{fu2022hungry} and Mamba~\cite{gu2024mamba} extended SSMs to handle a more diverse set of tasks, particularly language modeling. We now briefly describe S4D and Mamba, two models we focus on in this paper, and highlight how they differ from the original S4.

Note that since SSMs relate the states $x_t$ and $x_{t-1}$ via a linear equation, they are also known as linear recurrent neural networks (LRNNs). We use ``SSM'' and ``LRNN'' interchangeably.

\vspace{-1em}
\paragraph{\textbf{S4D}} S4D is a simplified version of S4 in which the complex transition matrix $A$ is constrained to be diagonal. This reduces the cost of matrix multiplication in the recurrence equations and hence leads to more efficient computation.

\vspace{-1em}
\paragraph{\textbf{Mamba}} Designed specifically for language modeling tasks, Mamba introduces input dependence (also called selectivity) into the state and output equations:
$
    h_t = A(x_t) h_{t-1} + B(x_t)x_t,
    y_t = \sigma(C(x_t)h_t + D(x_t)x_t).
$
Unlike time-invariant models like S4D, Mamba’s parameters vary with the input, making it input-dependent or time-variant. A critical design choice in Mamba, which, as we will see later, significantly affects its performance on state-tracking tasks, is that the transition matrix $A$ is real and diagonal. Furthermore, the original implementation (commonly used in practice) restricts $A$ to have only non-negative entries. In the next section, we will see that this first constraint prevents Mamba from solving many state-tracking tasks, such as modular counting. The further restriction to non-negative eigenvalues makes it incapable of solving even the parity task.

% \vspace{-1em}
\section{Related Work}~\label{section:related-work}
\vspace{-2em}

\paragraph{\textbf{Difficulty of state-tracking for SSMs and Linear RNNs}} \citet{merrill2024illusion} argue that the linear recurrence in most (parallelizable) SSMs places them in the complexity class $\TC^0$, the class of problems solvable by constant-depth, polynomial-size threshold circuits. This class is widely believed to be incapable of expressing non-solvable state-tracking tasks such as $S_5$\footnote{$S_5$ is the symmetric group on five elements, representing all permutations of five distinct objects.}. This implies that parallelizable SSMs cannot solve $S_5$ or similarly complex state-tracking tasks for long sequences unless the model depth scales with sequence length. To address this, \citet{merrill2024illusion} propose either adding nonlinearities to the recurrence or making the recurrence input-dependent. They show empirically that the latter approach enables a single-layer S4D to solve non-solvable state-tracking tasks across arbitrary sequence lengths.

\paragraph{\textbf{Difficulty of Parity}} While~\citet{merrill2024illusion} focus on the limitations of linear RNNs compared to full RNNs on non-solvable state-tracking tasks, \citet{sarrof2024expressive} highlight a significant gap between linear and nonlinear RNNs, even on solvable tasks such as parity, though for different reasons. They prove that multilayer input-dependent linear RNNs (SSMs) with diagonal, non-negative transition matrices (e.g., Mamba) cannot solve parity in finite precision for arbitrary sequence lengths (their Theorem 2), despite being able to solve all star-free regular tasks. They also show that time-invariant diagonal SSMs with negative eigenvalues (e.g., S4D) fail on parity due to the lack of input dependence (Theorem 13).

Extending these results to non-diagonal models, \citet{grazzi2024unlocking} prove that a multilayer SSM can solve parity in finite precision for arbitrary sequence lengths only if at least one layer has a negative eigenvalue. This implies that even DeltaNet\footnote{DeltaNet is a linear attention model that can also be interpreted as an SSM, with a diagonal plus low-rank transition matrix. More specifically, a generalized Householder matrix.} fails to solve parity in its standard form. They argue that existing SSMs typically lack either input dependence or negative (or complex) eigenvalues. Both of these are essential for solving parity and, more generally, for complex state-tracking tasks. To address this, they modify Mamba and DeltaNet to allow eigenvalues in the range $[-1, 1]$ instead of $[0, 1]$. This leads to empirical improvements on both parity and more challenging tasks like modular arithmetic.

While negative eigenvalues are sufficient for solving parity, \citet{grazzi2024unlocking} show that complex eigenvalues are necessary for harder tasks such as modular counting.\footnote{In a way, the modular counting task is simpler than parity, since the input is fixed to $1$, hence input dependence is not required.} Thus, although modifying Mamba to include negative eigenvalues enables it to solve parity, solving more complex tasks demands transition matrices with complex eigenvalues. They note that such matrices can be constructed by multiplying several real-eigenvalued matrices, provided the product is not triangular.\footnote{Because the eigenvalues of a triangular matrix are the diagonal elements, and the product of two triangular matrices remains triangular.}

Building on this idea, \citet{siems2025deltaproduct} propose DeltaProduct, an adaptive extension of DeltaNet that generalizes the transition matrix from diagonal-plus-rank-1 to a structured rank-$n$ matrix. Here $n$ is tunable to trade off between expressivity and efficiency. Their construction is based on products of $n$ generalized Householder matrices. A limitation of this approach is the computational cost of multiplying non-diagonal matrices. Moreover, Theorem 1 in~\citet{grazzi2024unlocking} does not specify whether the single layer containing negative or complex eigenvalues must also be input-dependent. It is unclear whether input dependence and eigenvalue complexity can be separated across different layers. In that case, a layer-wise combination of the two diagonal models of Mamba and S4D may provide a more efficient solution to the problem. We address this question in our main theorem in the next section.

\section{Theory}
We begin by reviewing Theorem 2 from \citet{sarrof2024expressive} and Theorem 1 from \citet{grazzi2024unlocking}, which establish the impossibility of solving parity with multilayer SSMs that do not simultaneously satisfy input-dependence and negativity conditions. The proofs in both works for the non-negative case rely on the failure of such models on a specific input, a sufficiently long sequence of all ones. Next, we prove that a one-layer, input-invariant model like S4D, with complex eigenvalues, can solve modular counting for any modulus, and accordingly can solve parity on that specific input. This result implies that the failure mode underpinning the negative result for non-negative SSMs on parity can be bypassed by adding a single S4D layer. Intuitively, because the input is constant, input dependence is not required for this task. Consequently, we propose a different class of input sequences that not only defeat non-negative SSMs but also challenge time-invariant SSMs. We show that even a layer-wise combination of positive and time-invariant SSMs fails to model parity on these sequences.
\subsection{non-negative SSMs Cannot Solve Parity}
Parity is the language of bitstrings with an even number of ones.
% \mehran{Define the parity task.} \behnoush{Done!}
\begin{theorem}[Theorem 1 of \citet{grazzi2024unlocking}]\label{thm:parity}
A finite precision LRNN with finitely many layers can solve parity for arbitrary input lengths only if in at least one layer, there exists $x$ such that $A(x)$ has at least one eigenvalue $\lambda \notin \set{x \in \R \st x \geq 0}$.
\end{theorem}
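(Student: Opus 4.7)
The plan is to prove the contrapositive: if every layer's transition matrix has all its eigenvalues in $\mathbb{R}_{\geq 0}$ (for every input), then the model cannot compute parity on inputs of arbitrary length. Building on the remark in the excerpt that both prior proofs exploit a long string of ones, I would focus on the constant input $x_t = 1$ for $t = 1, \ldots, T$, whose parity alternates between $0$ and $1$ as $T$ grows, and show that the model's output on this input must be identical for all sufficiently large $T$, contradicting its ability to track parity.

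First I would analyze a single layer in isolation. With constant input $x_t = 1$, any (possibly input-dependent) transition matrix reduces to the fixed matrix $A = A(1)$ with bias $b = B(1)$, so the recurrence becomes the autonomous affine system $h_t = A h_{t-1} + b$. Because the LRNN must operate on arbitrarily long sequences in finite precision, it has to be stable on the all-ones input, so the spectral radius of $A$ is at most $1$; together with non-negativity, every eigenvalue of $A$ lies in $[0, 1]$. In the eigenbasis each scalar recurrence $h_{t,i} = \lambda_i h_{t-1,i} + c_i$ with $\lambda_i \in [0, 1]$ has successive differences $h_{t+1,i} - h_{t,i} = \lambda_i (h_{t,i} - h_{t-1,i})$ of constant sign, so the sequence is monotone and bounded; in finite precision it reaches a stationary value after finitely many steps. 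Consequently, the layer's output is exactly constant for all $t$ beyond some threshold $T_1$ that depends only on the layer's parameters and the precision, not on $T$.

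Next I would chain this across the network by induction on depth. Once layer $k$ emits a constant output from time $T_k$ onward, the input stream seen by layer $k+1$ is eventually constant, and the previous single-layer argument applied to layer $k+1$ yields a further threshold $T_{k+1} \geq T_k$ after which layer $k+1$ also emits a constant output. Skip connections, pointwise nonlinearities, and linear read-out maps preserve the property ``eventually constant input implies eventually constant output'', so the argument carries through unchanged. Taking the maximum over all finitely many layers, I obtain a single threshold $T^\ast$ such that the model's final output is the same value for every input length $T \geq T^\ast$. This is incompatible with computing parity, which demands outputs $0$ and $1$ alternately on $1^{T^\ast}, 1^{T^\ast+1}, \ldots$, giving the desired contradiction.

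The main technical obstacle is formalizing ``stabilizes in finite precision'', especially when an eigenvalue equals $1$: in exact arithmetic such a direction grows linearly in $t$, but in a fixed finite-precision format the increments eventually fall below the representable resolution (or the state would overflow, contradicting arbitrary-length operability). A secondary subtlety is that the input to layer $k+1$ is not literally constant from $t=1$, only from $t \geq T_k$, so one must check that the pre-$T_k$ transient does not prevent eventual stabilization of layer $k+1$. This is immediate because the autonomous dynamics after $T_k$ depend only on the state at time $T_k$, which is drawn from the finite alphabet of representable states, so a uniform stabilization time exists. The remaining details are bookkeeping.
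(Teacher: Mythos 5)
Your proposal follows essentially the same route as the paper's own sketch (Appendix~A): restrict attention to the input $1^T$, observe that with constant input each layer becomes an autonomous affine recurrence whose state must become stationary after finitely many steps in finite precision (handling $\lambda<1$, $\lambda=1$, and the overflow case separately), propagate this layer by layer, and conclude that an eventually constant output cannot track the alternating parity of $1^T$. The only cosmetic difference is that you phrase the $\lambda>1$ case as a stability requirement rather than as saturation beyond the representable range, and you spell out the depth induction that the paper defers to its main theorem's proof; neither changes the substance.
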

In~\cref{app:proof-unlocking}, we provide a simplified overview of their proof in the diagonal case. The key point is that the argument relies on the specific input sequence 
$1^T$, and the crux of the proof is that the SSM’s hidden state converges to a fixed value after sufficiently many steps on this input.  In the following section, we prove that adding a time-invariant SSM layer with negative eigenvalues enables the model to solve parity on $1^T$, meaning this convergence-based argument no longer applies to the combination of positive Mamba and S4D. Building on this, the next section introduces a distinct failure mode.

\subsection{Modular Counting}
Modular counting  is the problem of counting modulo $m$ with the input of length $T$ being  $1^T$ and the desired output at the final step is $T \bmod m$.
The following theorem states that time-invariant SSMs with complex eigenvalues can solve this task.
\begin{proposition}\label{thm:modular-counting}
A single-layer S4D can solve modular counting.
\end{proposition}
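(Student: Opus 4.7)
The plan is to prove the proposition by explicit construction. Fix the modulus $m$ and let $\omega = e^{2\pi i / m}$, a primitive $m$-th root of unity. I would instantiate a single-layer S4D with hidden dimension $N = m-1$, diagonal transition matrix $A = \mathrm{diag}(\omega, \omega^2, \ldots, \omega^{m-1})$, input matrix $B = (1, 1, \ldots, 1)^\top$, and initial state $h_0 = 0$. Under the constant input $x_t = 1$, the geometric-sum formula gives
\[
(h_T)_j \;=\; \sum_{k=0}^{T-1} \omega^{jk} \;=\; \frac{\omega^{jT}-1}{\omega^j-1}, \qquad j=1,\ldots,m-1,
\]
which depends on $T$ only through $T \bmod m$ and, via the $j=1$ coordinate alone, takes $m$ distinct values across the residue classes.

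For the readout, I would invoke the orthogonality of characters,
\[
\frac{1}{m}\sum_{j=0}^{m-1}\omega^{j(T-\ell)} \;=\; \mathds{1}\{\,T\equiv\ell\!\!\!\pmod m\,\},
\]
together with the affine relation $\omega^{jT} = (\omega^j - 1)\,(h_T)_j + 1$ that recovers the powers $\omega^{jT}$ from the state coordinates. Substituting rewrites each indicator (for $\ell = 0, 1, \ldots, m-1$) as a linear function of $(h_T)_1, \ldots, (h_T)_{m-1}$ plus a constant. Taking these linear coefficients as the rows of $C$, absorbing the constants into $Dx_T = D$ (using $x_T=1$), and letting $\sigma$ be the identity (or, if categorical outputs are required, an argmax), the output at step $T$ becomes exactly the one-hot encoding of $T \bmod m$, which is what modular counting demands.

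The main obstacle here is bookkeeping rather than a genuine mathematical hurdle. S4D's standard parameterization pairs complex-conjugate eigenvalues so that the real-valued output is preserved; the construction above can be translated into that form by pairing each $\omega^j$ with its conjugate $\omega^{m-j}=\overline{\omega^j}$ and taking real-image versions of $B$, $C$, $D$, without any change to the argument. Finite precision is not an obstacle either: the state trajectory is exactly periodic with period $m$ and uniformly bounded, so a constant number of bits (depending only on $m$) suffices for arbitrarily long inputs. The contrast with Mamba, highlighted right after the statement, is that its non-negative real eigenvalues force the state to converge on the constant-1 input, whereas unit-modulus complex eigenvalues make the state an exact discrete rotation, which is precisely what is needed to count modulo $m$.
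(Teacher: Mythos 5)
Your construction is correct, but it takes a genuinely different route from the paper's. The paper's proof is a one-line, one-dimensional construction: it sets $B = 0$, takes a scalar state with $A = \exp(2\pi i/m)$ and nonzero initial state $h_0 = 1$, so that $h_k = \exp(2\pi i k/m)$ is a pure rotation, and reads out by checking whether the state has returned to the accept state $h = 1$ (the paper phrases this as recognizing $(1^m)^*$; note it writes $T$ where the modulus is meant). Your version instead drives an $(m-1)$-dimensional state from $h_0 = 0$ with $B = \mathbf{1}$, obtains geometric character sums, and inverts them via the affine relation $\omega^{jT} = (\omega^j - 1)(h_T)_j + 1$ and orthogonality of characters to produce a one-hot readout of $T \bmod m$. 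What your approach buys is a readout that is \emph{linear} in the state and emits the full residue class (matching the task statement ``output $T \bmod m$'' literally, rather than only accepting residue $0$), and it does not rely on a nonzero learnable initial state; what it costs is hidden dimension $m-1$ and some bookkeeping, versus the paper's scalar state. Both establish the proposition; your remarks on conjugate pairing for real-valued S4D parameterizations and on boundedness under finite precision are sound and go beyond what the paper spells out.
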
  

\begin{proof}
Since the input sequence is always the same, i.e., $1$'s, we expect that there exist a solution independent of the weight $B$; keeping the input and hidden space dimension as one, we let $B = 0$, $A = \exp(2\pi i/T)$ and $h_0 = 1$. If we take $h_0$ to be the accept state then this S4D model recognizes the language $(1^T)^*$.~\behnoush{why do we need * here?}
\end{proof}
Since solving parity on the sequence $1^T$ is equivalent to doing modular counting with modulus $2$ on it
, the above theorem implies that the failure mode used to prove~\cref{thm:parity} can be circumvented by adding a time-invariant SSM layer including complex (even only negative ) eigenvalues. This raises the question of whether parity task can be solved by a combination of input-dependent and complex-valued recurrence layers. To answer this question, we suggest another failure mode for non-negative SSMs that cannot be alleviated by adding a time-invariant layer, even with complex or negative eigenvalues.

\subsection{Another Failure Mode for  non-negative SSMs}
\begin{lemma}
Non-negative diagonal SSMs fail to solve parity task for arbitrary sequence length in finite precision on any input sequence of the form 
\(\,
\underbrace{
  \underbrace{0 \ldots 0}_{k\ \text{zeros}} 
  \quad 
  \underbrace{1 \ldots 1}_{m\ \text{ones, } m\ \text{odd}}
}_{\text{one cycle}} 
\quad
\underbrace{0 \ldots 0}_{k\ \text{zeros}} 
\quad 
\underbrace{1 \ldots 1}_{m\ \text{ones, } m\ \text{odd}} 
\cdots\,
\).
\end{lemma}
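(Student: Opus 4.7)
The plan is to show that for any stack of non-negative diagonal SSM layers processing an input of the prescribed form, the hidden state of every layer at the end of each cycle of length $k+m$ eventually stops changing in finite precision. Since the ground-truth parity at the end of the $n$-th cycle equals $nm \bmod 2 = n \bmod 2$ (because $m$ is odd), the correct output must alternate between $0$ and $1$, which gives the desired contradiction with eventual stabilization of the top-layer output.

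First I would analyze a single layer in isolation. Unrolling $h_t = A(x_t) h_{t-1} + B(x_t) x_t$ over one full cycle $0^k 1^m$ yields an affine per-cycle update $h_n = M h_{n-1} + b$, where $h_n$ is the hidden state at the end of cycle $n$, $M = A(1)^m A(0)^k$ is diagonal and non-negative, and $b$ aggregates the contributions from the ones-portion. Because $M$ is diagonal, each coordinate $i$ obeys an independent scalar affine recurrence $h^{(i)}_n = m_i h^{(i)}_{n-1} + b_i$ with $m_i \geq 0$. A short computation of the successive differences $d_n = h^{(i)}_{n+1} - h^{(i)}_n$ gives $d_{n+1} = m_i d_n$, so $d_n$ never changes sign and the exact-arithmetic sequence is monotone. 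In finite precision, where the representable states form a finite set and rounding is monotone (including the standard saturating floating-point case), a monotone sequence must stabilize after finitely many cycles. Hence there is some $N_1$ such that the layer's hidden state at every cycle boundary $n \geq N_1$ is identical, so the entire within-cycle state trajectory and output are identical across all cycles $n \geq N_1$.

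Next I would lift this through the stack by induction on the layer index. For $n \geq N_1$, layer $2$ receives an input that is exactly periodic with period $k+m$, so its own per-cycle map is again affine with a diagonal non-negative multiplier (the product of $A^{(2)}(\cdot)$ over one period of the layer-$1$ output), and the same monotone-in-finite-precision argument produces an $N_2 \geq N_1$ past which layer $2$'s cycle-boundary state is constant. Output nonlinearities $\sigma$ and skip connections do not break the induction, since they send any eventually period-$(k+m)$ signal to another eventually period-$(k+m)$ signal. Iterating to the top, the final output at the end of each cycle is eventually constant in $n$, contradicting the required alternation. The main obstacle is Step~2: making precise the transfer from exact-arithmetic monotonicity to genuine finite-precision stabilization, i.e., checking that rounding preserves monotonicity and that ``eventually periodic on a finite set'' combined with ``monotone'' forces period exactly $1$, ruling out any small alternating cycle that could otherwise encode parity.
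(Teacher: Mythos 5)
Your proof is correct and follows essentially the same route as the paper's: coarse-grain the recurrence over one cycle $0^k 1^m$, observe that the effective per-cycle transition matrix $A(1)^m A(0)^k$ is again diagonal and non-negative, and conclude that finite-precision stabilization of the cycle-boundary states contradicts the alternation of parity forced by $m$ being odd. The only minor differences are that you justify stabilization via monotonicity of successive differences rather than the paper's eigenvalue case analysis ($\lambda<1$, $\lambda=1$, $\lambda>1$), and that you spell out the multi-layer induction which the paper's lemma leaves implicit (deferring it to the proof of its main theorem).
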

\begin{proof}
    Starting with $k=m=1$, with the input sequence simplified to $(01)^T$, the first two steps of the recurrence~\eqref{eq:ssm} give
    \begin{align}
        h_1 = A(0)h_0 + B(0)\,,\quad
        h_2 = A(1)A(0)h_0 +A(1)B(0) + B(1) 
    \end{align}
    By defining $A(01) \coloneqq A(1)A(0)$ and $B(01) \coloneqq A(1)B(0) + B(1)$ 
    and starting from $h_0$, for any even step $2t, t\in\mathbb{Z}$, we have
    $h_{2t} = A(01)h_{2t-2} + B(01)$.
For $A(0)$ and $A(1)$ positive semi-definite (PSD), which is the case for all SSMs we are aware of, $A(01)$ is guaranteed to be non-negative~(proof in~\cref{app:psd-product}). Hence, this ``two-step'' recurrence will have the same dynamics as the one in~\cref{eq:ssm} with non-negative  transition matrices. In particular, the states at even steps in the sequence, follow a similar evolution to the one described by ~\cref{eqn:state-evolution} in the proof of~\cref{thm:parity}. The only difference is that $A(1)$ and $B(1)$ are replaced by $A(01)$ and $B(01)$. Therefore, the finite precision assumption again results in the state collapse after some threshold $\tau_0$. As before, this means that non-negative SSMs cannot model parity on the sequence $(01)^T$, since the state after each length-$2$ cycle of the form $01$ should flip, while its collapse means that the state remains constant. The same logic goes for any sequence of the form used in the statement of the lemma, i.e., with an odd number of ones in each cycle. The key point is that the product of any number of non-negative diagonal matrices remains non-negative.
\end{proof}

\subsection{Can S4D and Mamba Together Solve Parity?} The following theorem states that combining S4D and Mamba in different layers still cannot solve parity. This result can be generalized to the case of a model made of non-negative input-dependent SSM layers and time-invariant SSM layers with negative eigenvalues, where both types of layers are diagonal.The proof is provided in~\cref{app:proof-main-theorem}.

\begin{theorem}\label{thm:mamba_and_s4d}
A finite precision LRNN consisting of S4D and Mamba layers and skip connections and learnable initial state cannot solve parity.
\end{theorem}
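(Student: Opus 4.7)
My plan is to extend the preceding lemma by analyzing the cycle-level dynamics of every layer on a periodic input $u = (0^k 1^m)^T$. The key observation is that if $k+m$ is chosen to be even (with $m$ still odd), then every S4D layer with real negative eigenvalues has a $(k+m)$-step transition $A^{k+m}$ that is non-negative diagonal (since a real negative number raised to an even power is non-negative), and every Mamba layer's $(k+m)$-step transition is a product of non-negative diagonals, itself non-negative diagonal. Thus the non-negative collapse argument used in the preceding lemma applies to every layer at the cycle scale.

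Concretely, I would prove by induction on the layer depth the invariant that, on the input $u$, the hidden state of every layer at positions $t = j(k+m)$ is eventually constant in $j$ under finite precision (and hence the output of that layer is eventually periodic with period $k+m$). For Mamba, this follows from the preceding lemma applied at the cycle scale to the transition $\prod_{s=1}^{k+m} A(u_s)$. For S4D (with real negative eigenvalues), the end-of-cycle recurrence $h_{(j+1)(k+m)} = A^{k+m} h_{j(k+m)} + C$, with $C$ the per-cycle drift vector, is a non-negative diagonal contraction that converges to a fixed point. Skip connections and pointwise nonlinearities preserve the invariant, because the raw input bit at $t = j(k+m)$ is the constant $1$ and each layer's within-cycle trajectory is a deterministic periodic function of its collapsed end-of-cycle state and its (eventually periodic) input from below.

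Since the parity of the prefix of $u$ at position $t = j(k+m)$ is $jm \bmod 2 = j \bmod 2$ (using that $m$ is odd), it alternates between $0$ and $1$, contradicting the eventual constancy in $j$ of the network's output at these positions for sufficiently large $T$. The main obstacle is handling S4D layers whose eigenvalues are complex or on the unit circle, since such eigenvalues need not contract at the cycle scale and could in principle sustain non-decaying oscillations encoding parity. The resolution exploits finite precision: the model's representable eigenvalues form a finite set, so $k+m$ can be chosen adversarially against the model so that for every unit-circle $\lambda$, either $\lambda^{k+m}$ lies strictly inside the unit disk (making S4D a contraction at the cycle scale) or $\lambda^{k+m} = 1$, which drives the nonzero per-cycle drift to unbounded linear growth and therefore violates finite precision over long $T$.
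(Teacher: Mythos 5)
Your overall strategy is the paper's: feed the network a periodic input whose period is tuned adversarially to the model's eigenvalues, show by induction on depth that every layer's state at cycle boundaries becomes stationary under finite precision (because the cycle-scale transition is an effectively non-negative diagonal map), and conclude that the alternating parity at those boundaries cannot be produced. The paper uses $s^T$ with $s = 10^{W-1}$, which is your $(0^k1^m)^T$ with $m=1$, and its Case~1/Case~2 computations are exactly your cycle-level recurrences for S4D and Mamba.

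The gap is in your treatment of complex S4D eigenvalues, which is the crux of the theorem rather than a side case: a unit-modulus complex eigenvalue is precisely what lets a single S4D layer do modular counting (\cref{thm:modular-counting}), so this is the case the argument must kill. If $\abs{\lambda}=1$ then $\abs{\lambda^{k+m}}=1$ for \emph{every} choice of $k+m$, so your first branch (``$\lambda^{k+m}$ lies strictly inside the unit disk'') is vacuous. Your second branch ($\lambda^{k+m}=1$) is the right idea but needs the argument of $\lambda$ to be a rational multiple of $2\pi$; finiteness of the set of representable eigenvalues does not give you this by itself (e.g., $\lambda=(3+4i)/5$ has unit modulus and an angle that is an irrational multiple of $\pi$, so no power of it equals $1$). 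You also leave unaddressed the general complex eigenvalue $re^{i\theta}$ with $r\neq 1$ and $\theta\notin\{0,\pi\}$, for which neither ``even power is non-negative'' nor ``$\lambda^{k+m}=1$'' applies, and the cycle-scale map is not a non-negative diagonal. The paper's fix is a single uniform move: write each S4D eigenvalue as $A_j=r_j\exp(2\pi i q_j)$ with $q_j$ rational, choose $W$ with $Wq_j\in\mathbb{N}$ for all $j$, so that $A_j^W=r_j^W\geq 0$; this simultaneously handles negative, unit-circle, and general complex eigenvalues and reduces every S4D layer to the same non-negative cycle-scale recurrence as Mamba. (A minor additional slip: $\lambda^{k+m}=1$ with nonzero drift gives linear growth and saturation, but if the drift is zero the state is exactly constant --- which still supports the collapse conclusion, so you should state both subcases rather than assert unbounded growth.)
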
% TODO: mention diagonality of layers

\section{Experiments}
The code for our experiments is available at \href{https://github.com/abhishekpanigrahi1996/MOSS/tree/main/submissions/submission-92}{this repository}. Our implementation for RNN is based on the PyTorch RNN layer. For S4D, we use the original implementation from \href{https://github.com/state-spaces/s4/blob/main/models/s4/s4.py}{state-spaces/s4}. For Mamba, we use the implementation from \href{https://github.com/johnma2006/mamba-minimal}{mamba-minimal}.

\paragraph{Parity} We try the parity task with four types of models: vanilla RNN, two-layer S4D, two-layer Mamba, two-layer model with both S4D and Mamba (Model sizes are given in~\cref{app:exp-parity}). We show the summary of results for training the models on sequences of length $8$ and their extrapolation to longer length up to $10K$. The main observation is that extrapolation remains an issue for all models except RNN. The training accuracy of SSMs indicates non-trivial performance on the train set, whereas performance on the longer sequences remains around 50\%, which means that linear models did not learn the correct algorithm for parity.
\begin{table}[h]
    \centering
    \begin{tabular}{c|cc}
    \toprule
    Model & Train Accuracy & Extrapolation Accuracy \\
    \midrule
    RNN & 100\% & 100\% \\
    S4D & 99.7\% & 50\% \\
    Mamba & 100\% & 50\% \\
    Mamba + S4D & 100\% & 50\% \\
    \bottomrule
    \end{tabular}
    \caption{Performance of various models on the parity task.}
    \label{tab:parity}
\end{table}
\paragraph{Offset Prediction} We design a counting task that is very similar to modular counting. Here the goal is to examine the practical implications of~\cref{thm:modular-counting} on a real task. While modular counting by definition has a fixed input, this variation allows the model to see different inputs. The input is a binary on/off signal: a stimulus of 1 is presented for a fixed 10-timestep interval (inter-stimulus interval, or ISI), followed by a stimulus of 0 for a randomly selected duration from $[20, 40]$ (inter-trial interval, or ITI). This ITI/ISI cycle repeats throughout a sequence of fixed length $200$ timesteps. Each sequence begins with an ITI, followed by alternating ITIs and ISIs. The task requires the models to detect an ISI onset to count till at the end of the stimulus, encouraging them to learn the fixed duration and representation of the ISI. In ~\cref{app:exp-modular-counting}, we propose an analytic solution for it that S4D can find. Interestingly, as shown in~\cref{fig:offset}, both S4D and Mamba can solve this task. For Mamba, we still do not have an explanation how it can solve the task. ~\cref{app:exp-modular-counting} provides all the details on the theory and results for this task.
\begin{figure}[H]\label{fig:offset}
    \centering
    \includegraphics[width=0.7\linewidth]{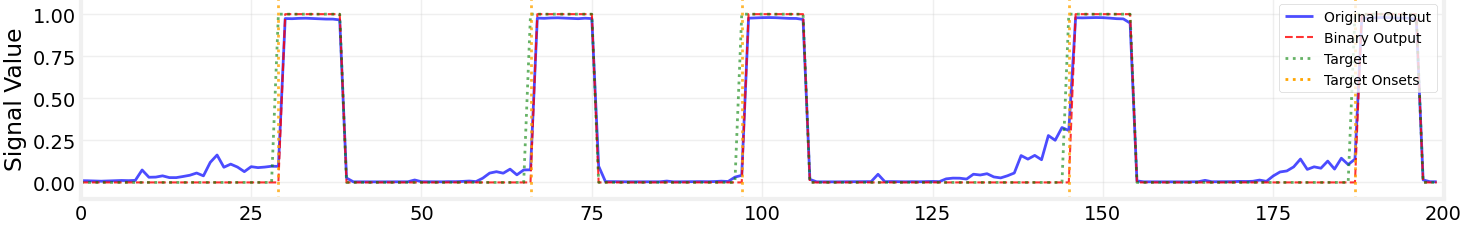}
    \includegraphics[width=0.7\linewidth]{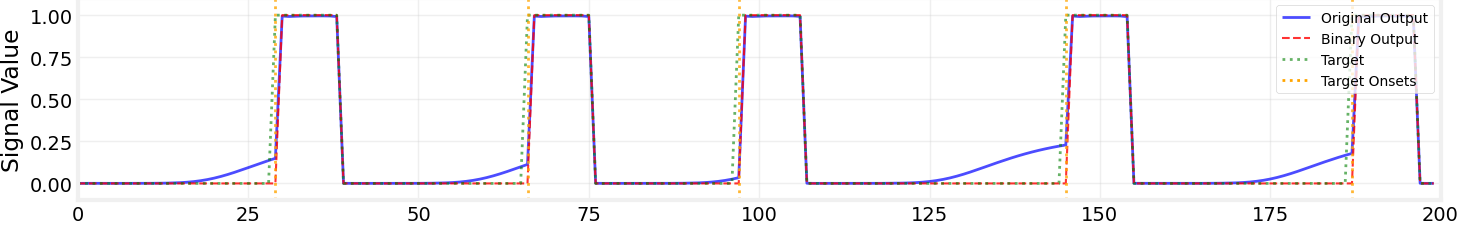}
    \caption{(\textbf{Top}) Output of S4D on the offset prediction task, trained for 1k epochs with a hidden size of 8 and a learning rate of 0.005. (\textbf{Bottom}) Output of Mamba on the offset prediction task, trained for 1k epochs with a hidden size of 8 and a learning rate of 0.01. Both models predict the offset correctly and anticipate the onset.}
    \label{fig:offset}
\end{figure} 

\section{Conclusion, Limitations, and Future Directions}
The core question we addressed in this paper is whether a layer-wise combination of input-independent negative SSMs (like S4D) and input-dependent non-negative ones (such as Mamba) can solve parity on all sequences, or both properties should be present in a single recurrence layer simultaneously. Our theoretical result shows that for the diagonal case, combining these two types of recurrence in different layers does not enable the model to solve parity. Hence, we hypothesize that there should exist at least one recurrence layer with both properties. Possible future directions include generalizing our results to non-diagonal recurrence for the parity task and to the complex-valued recurrence for more complicated state tracking tasks.

\paragraph{Limitations} Our study simplifies the SSM formulation by using a vector-valued hidden state, as in~\citep{sarrof2024expressive}, instead of the common matrix representation used in SSMs. While we do not expect this to affect our results, a careful analysis using the full formulation is a valuable direction for future work. Additionally, accounting for other architectural components, such as layer normalization or dropout, would further strengthen the analysis.
\section*{Acknowledgements}
We thank Olexa Bilaniuk for assistance with Appendix E in the paper. We also thank Andreas Madsen for valuable discussions on related topics and for developing the excellent \emph{lrcurve} package which was used for real-time visualization in this submission.
Sarath Chandar was supported by the Canada CIFAR AI Chairs program, the Canada Research Chair in Lifelong Machine Learning, and the NSERC Discovery Grant. Siamak Ravanbakhsh was in part supported by Canada CIFAR AI Chairs and NSERC Discovery
programs. Fran\c{c}ois Rivest was supported by the NSERC discovery development grant and CDARP.
\bibliography{references}
\bibliographystyle{icml2025}

\newpage

\appendix
\section{Non-negative SSMs Cannot Solve Parity}\label
{app:proof-unlocking}
For simplicity, we focus on the diagonal SSM 
 and adjust their proof to it. Consider the SSM equations as
\begin{align}\label{eq:ssm}
h_t = A(x_t) h_{t-1} + B(x_t)\,, \quad 
y_t = \sigma(h_t, x_t)
\end{align}
where $\sigma$ represents a general non-linear function of $h_t$ and $x_t$. By unrolling the recursion relation, we get the hidden state at step $k$ given by
$
h_k = \prod _{i=1}^k A(x_i) h_0  + \sum_{i=1}^k B(x_i)\prod_{j=i+1}^k A(x_j),
$
with the convention of $\prod_{j=k+1}^k A(x_j) = I$ to reduce clutter. 
Now, if we assume the input sequence is $x_1\dots x_k = 1^k$, the state at time $k$ is
\begin{equation}\label{eqn:state-evolution}
    h_k = A(1)^k h_0 + \sum_{i=1}^k B(1) A(1)^{k-i} = A(1)^k h_0 + \sum_{i=0}^{k-1} B(1) A(1)^{i}
\end{equation}
Next step is to show that in finite precision, this value converges to a constant after some sequence step $\tau_{\text{cut-off}}$.
Since the powers of   a diagonal matrix are given by the matrix of the diagonal elements, $\lambda_i$, each to that same power, and here the matrix $A$ is positive, the elements of $A(1)^k$ and $A(1)^i$ after some given time $t_0$ will either grow to beyond the defined precision (if $\lambda_i>1$), or fall below it and converge to zero (if $\lambda_i<1$). For $\lambda_i = 1$, if $B(1)=0$, it converges to some fixed values, otherwise the sum term continues growing until it goes beyond the fixed precision and hence converges to a fixed rounded value.
The proof for non-diagonal case is given in~\cite{grazzi2024unlocking}.

%%%%%%%%%%%%%%%%%%%%%%%%%%%%%%%%%%%%%%%%%%%%%%%%%%%%%%%%%%%%%%%%%%%%%%%%%%%%%%%
%%%%%%%%%%%%%%%%%%%%%%%%%%%%%%%%%%%%%%%%%%%%%%%%%%%%%%%%%%%%%%%%%%%%%%%%%%%%%%%

\section{Proof of \cref{thm:mamba_and_s4d}}\label{app:proof-main-theorem}

\begin{proof}
For every such model, we construct an input sequence on which it fails to produce the correct output. We follow the proof structure of Theorem 13 from \citet{sarrof2024expressive}, while extending it to account for the possibility of positive Mamba layers, skip connections, and arbitrary initial states. Note that the $A$ matrix in both S4D and Mamba is diagonal. Moreover, in Mamba, the diagonal entries are non-negative.

In any S4D layer, each diagonal entry $A_j \in \mathbb{C}$ can be written as $A_j = r_j \exp(2\pi i q_j)$, where $q_j \in [0,1]$ is rational and $r_j \in \mathbb{R}_{\geq 0}$. Since there are finitely many S4D layers and each layer has finitely many such $A_j$, we can choose a positive integer $W$ such that $W q_j \in \mathbb{N}$ for every $j$ in each layer. Importantly, this ensures $(A_j)^W = (r_j)^W \in \mathbb{R}_{\geq 0}$.

Let $s = 10^{W-1}$ (i.e., a string consisting of a single one followed by $W - 1$ zeros). We will show that the model fails to compute parity on the input $s^T$ (the string $s$ repeated $T$ times) when $T$ is sufficiently large.

Consider the state dynamics of any layer of the model (whether S4D or Mamba) on an input sequence of the form $(x)_i = (x_1 \dots x_W)^*$\footnote{$*$ denotes the Kleene star: zero or more repetitions.}. Note that $s^T$ is of this form. The claim is that for each layer $k$ and for each $i = 1, \dots, W$, the sequence $(h^{(k)}_{tW+i})_{t \in \mathbb{N}}$ converges in the finite-precision setting as $t \rightarrow \infty$. In this context, convergence means that the sequence becomes stationary after some finite time. Since the parity of $s^T$ equals the parity of $T$, it becomes impossible to extract the parity from $h_{TW}$ for large $T$.

We prove the claim by induction on the number of layers. For simplicity, we treat the state of the zeroth layer as the raw input. The base case holds since the input $s^T$ is of the form $(x_1 \dots x_W)^*$.

Assume the input to the $k$-th layer is of this form and show that its output will also be of the same form. We suppress the layer index $k$ and dimension index for clarity. This layer is either an S4D layer or a positive Mamba layer.

\textbf{Case 1: S4D layer.} Consider the sequence $(h_{tW})_{t \in \mathbb{N}}$ and let $h_0$ be the state of the system after the input has stabilized. The derivation, similar to \citet{sarrof2024expressive}, proceeds as follows:
\begin{align*}
    h_{tW} &= A^{tW}h_0 + \sum_{k=1}^{tW} A^{tW-k} B(x_k) \\
           &= r^{tW} h_0 + \sum_{j=1}^{W} \sum_{k=0}^{t-1} A^{(t-k)W-j} B(x_{kW+j}) \\
           &= r^{tW} h_0 + \sum_{j=1}^{W} \sum_{k=0}^{t-1} r^{(t-k)W} A^{-j} B(x_j) \\
           &= r^{tW} h_0 + \sum_{j=1}^{W} \left( \sum_{k=0}^{t-1} r^{(t-k)W} \right) \left( A^{-j} B(x_j) \right) \\
           &= \underbrace{r^{tW} h_0}_{U_1} + \sum_{j=1}^{W} \underbrace{\left( \sum_{k=1}^{t} r^{kW} \right)}_{U_2} \underbrace{\left( r^{-j} \exp(-2\pi i j q) B(x_j) \right)}_{U_3}
\end{align*}

Here, $U_2$ is independent of $t$. Intuitively, $U_3 \in \mathbb{C}$ determines a direction in the complex plane, while $U_2 \in \mathbb{R}$ determines the magnitude. $U_1$ may converge or diverge exponentially. Similarly, $U_2$ may converge exponentially, diverge linearly, or diverge exponentially. Consequently, in the finite-precision setting, $h_{tW}$ must converge for sufficiently large $T$.

\textbf{Case 2: Positive Mamba layer.} The idea here is to approximate linear time-invariant (LTI) behavior at a coarse-grained level. Again, consider the sequence $(h_{tW})_{t \in \mathbb{N}}$ and let $h_0$ be the state of the system after its input has stabilized. Let $\tilde{A} = A(x_1) \dots A(x_W)$. Then:
\begin{align*}
    h_{tW} &= \left( A(x_1) \dots A(x_{tW}) \right) h_0 + \sum_{k=1}^{tW} \left( A(x_{k+1}) \dots A(x_{tW}) \right) B(x_k) \\
           &= \tilde{A}^{t} h_0 + \sum_{j=1}^{W} \sum_{k=0}^{t-1} \left( A(x_{kW+j+1}) \dots A(x_{tW}) \right) B(x_{kW+j}) \\
           &= \tilde{A}^{t} h_0 + \sum_{j=1}^{W} \sum_{k=0}^{t-1} \tilde{A}^{(t-k)} \left( A(x_{j+1}) \dots A(x_W) \right) B(x_j) \\
           &= \underbrace{\tilde{A}^{t} h_0}_{U_1} + \sum_{j=1}^{W} \underbrace{\left( \sum_{k=1}^{t} \tilde{A}^k \right)}_{U_2} \underbrace{\left( A(x_{j+1}) \dots A(x_W) B(x_j) \right)}_{U_3}
\end{align*}

As before, $U_3$ does not depend on $t$. Both $\tilde{A}^t$ and $U_2$ are diagonal matrices with non-negative entries. Thus, $U_1$ can converge or diverge exponentially, and $U_2$ can converge exponentially, diverge linearly, or diverge exponentially. Hence, in the finite-precision setting, $h_{tW}$ must converge for sufficiently large $T$.

Since both the input and state fall into a cycle of length $W$, any non-recurrent function of their concatenation also falls into a cycle of length $W$. This completes the induction step and proves our claim for any finite number of layers. Allowing for concatenation shows that skip connections do not increase expressivity in this setting. The proof also holds regardless of the system’s initial state.
\end{proof}

\section{Offset Prediction Task}\label{app:exp-modular-counting}
Here, we show an analytic solution for S4D for this task. Next, we present our results.
\subsection*{A Solution for Offset Prediction Task with S4D}

\mehran{Unify the name of the task: it is called triggered fixed counting, offset prediction, and counting... Use a single name consistently.}
We consider the task of next-token prediction for $1$s and random bursts of $0$s of fixed length $n$. We show that one layer of S4D with complex scalar state is capable of correctly predicting all tokens except for the first token of a burst. At a high-level, we will be simulating a finite-state automaton that remains in a \textit{sleep} state when observing $1$s, and \textit{counts} to $n$ when observing $0$s, ending up back at the sleep state. There are thus $n$ states in total. Note that the automata will not be able to handle interruptions to the counting procedure, so it will go into a fail state if the number of $0$s is not a multiple of $n$. The model will predict $1$ at the sleep state and $0$ otherwise. Letting $h_0 = 1, A = \exp(i  2\pi / n), B = 1 - \exp(i  2\pi /n)$ constructs the desired system. The sleep state is $h = 1$. The output function is $\phi(h_t) = \mathbf{1}_{\{h_t = 1\}}$.

\begin{figure}[!h]
  \centering
  \resizebox{0.6\linewidth}{!}{%
    \begin{tikzpicture}[font=\large, ->, >=stealth, shorten >=1pt, node distance=2.8cm, on grid, auto]
      \node[state, initial] (s0) {$s_0$};
      \node[state] (s1) [right=of s0] {$s_1$};
      \node[state] (s2) [right=of s1] {$s_2$};
      \node[state] (s3) [right=of s2] {$s_3$};
      \node[state] (fail) [below=2cm of s2] {fail};

      \path (s0) edge[loop above] node {$1$} (s0);
      \path (s0) edge node {$0$} (s1);
      \path (s1) edge node {$0$} (s2);
      \path (s2) edge node {$0$} (s3);
      \path (s3) edge[bend right=25] node[above] {$0$} (s0);

      \path (s1) edge[bend left=20] node[below] {$1$} (fail);
      \path (s2) edge node[right] {$1$} (fail);
      \path (s3) edge[bend right=20] node[below] {$1$} (fail);

      \path (fail) edge[loop right] node {$0,1$} (fail);
    \end{tikzpicture}
  }
  \caption{Finite-state automaton for offset prediction task when $n = 4$.}
  \label{fig:fsa-n4}
\end{figure}
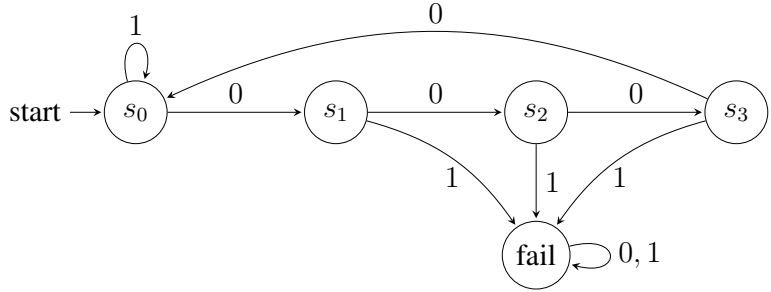
\subsection{Results}
In this section, we provide the results for the offset prediction task and generalization performance of S4D and Mamba on the offset prediction task. The two setups designed for evaluating their generalization are explained below.

\begin{figure}[H]
    \centering
    \includegraphics[width=0.8\linewidth]{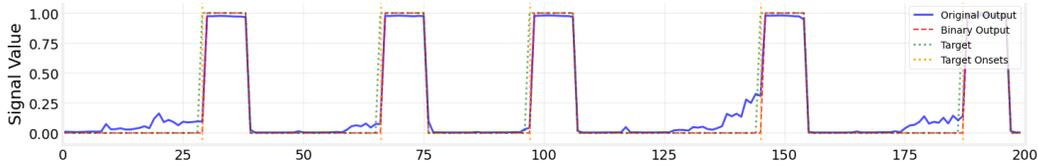}
    \caption{Offset prediction of S4D. We see that S4D correctly predicts the offset for all ISIs.}
    \label{fig:two-isi}
\end{figure}

\begin{figure}[H]
    \centering
    \includegraphics[width=0.8\linewidth]{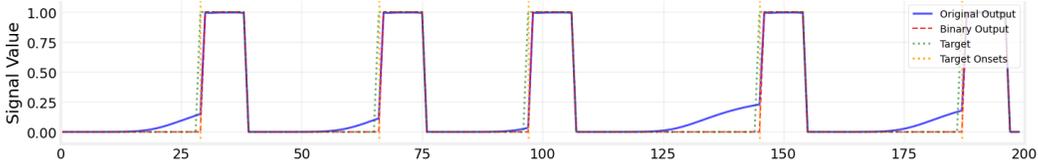}
    \caption{Offset prediction of Mamba. We see that Mamba correctly predicts the offset for all ISIs.}
    \label{fig:two-isi}
\end{figure}

\begin{figure}[H]
    \centering
    \includegraphics[width=0.8\linewidth]{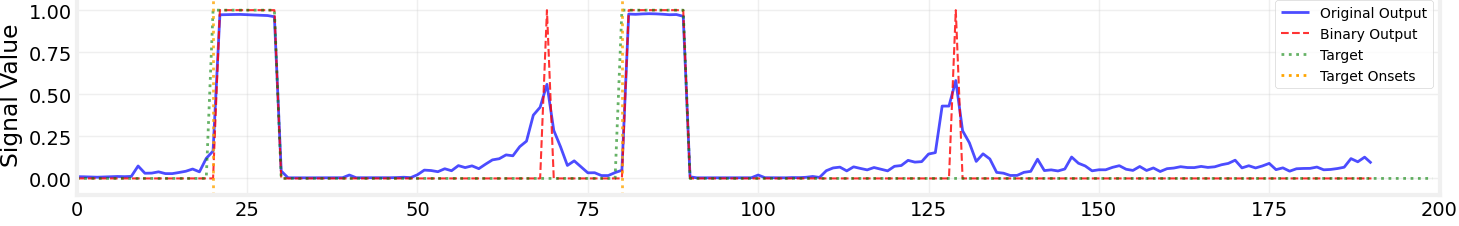}
    \caption{Two ISI outputs of S4D. We see that S4D correctly predicts the offset for both ISIs.}
    \label{fig:two-isi}
\end{figure}

\begin{figure}[H]
    \centering
    \includegraphics[width=0.8\linewidth]{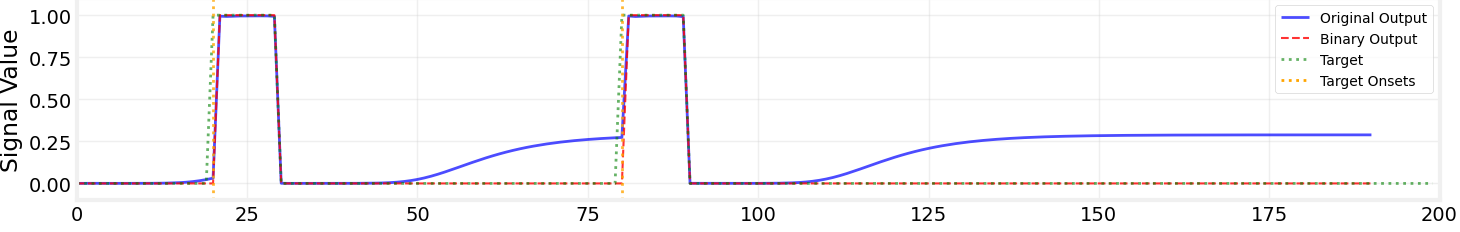}
    \caption{Two ISI outputs of Mamba. We see that Mamba correctly predicts the offset for both ISIs.}
    \label{fig:two-isi}
\end{figure}

\paragraph{Two ISIs} Here we test the model's generalizability by introducing it to signals where there are only two ISI present, separated by a longer than trained ITI. The objective is to determine if the models truly understand the onset of the ISI and reset their memory. The results in \cref{fig:two-isi} show that both models correctly predict both offsets.

\begin{figure}[H]
    \centering
    \includegraphics[width=0.8\linewidth]{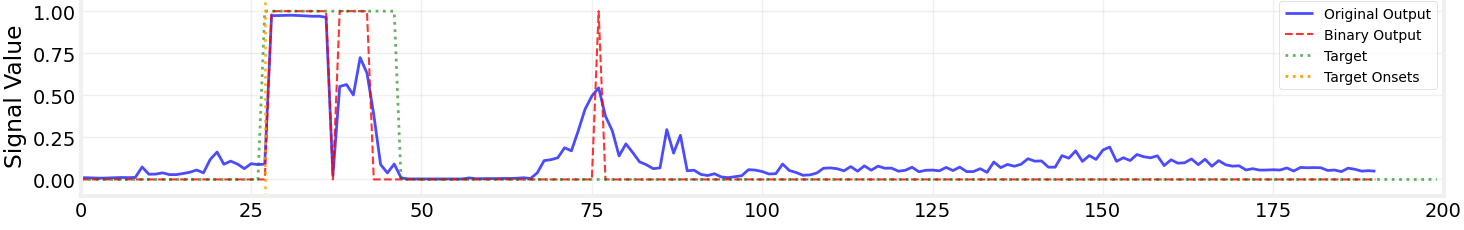}
    \caption{Double ISI outputs for  S4D. We observe  that S4D can anticipate the second ISI.}
    \label{fig:double-isi}
\end{figure}

\begin{figure}[H]
    \centering
    \includegraphics[width=0.8\linewidth]{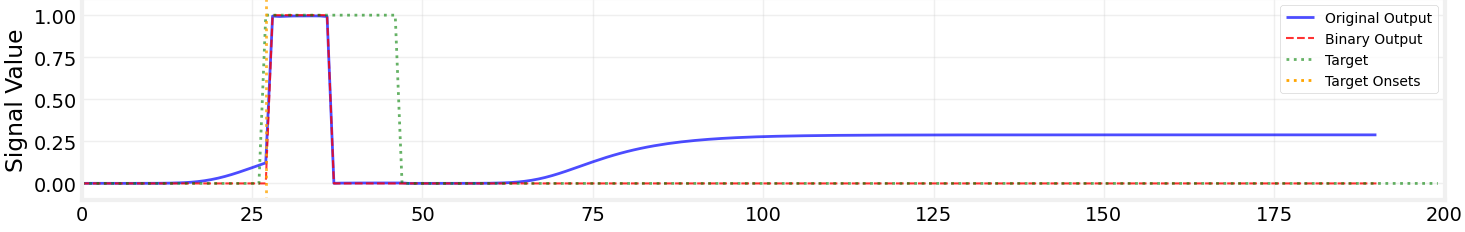}
    \caption{Double ISI outputs for Mamba . Mamba does not anticipate the second ISI.}
    \label{fig:double-isi}
\end{figure}
\paragraph{Double ISI} This tests the model's ability to count when introduced to two consecutive ISIs. The model should turn off at the first offset and then turn on again before the end of the second offset. \cref{fig:double-isi} shows that S4D predicts the initial offset accurately and turns on afterwards. Mamba only predicts the offset correctly, but fails to predict the initial anticipated offset.

\section{Parity Task}\label{app:exp-parity}

Here, we give the details of model sizes for the parity task. For each model, the embedding size and hidden state size are as below
\begin{table}[h]
    \centering
    \begin{tabular}{c|ccc}
    \toprule
    Model & number of layers & embedding size &hidden size (for RNN) /state size (for SSM) \\
    \midrule
    RNN & 1 & 2 & 8    \\
    S4D & 2 & 8 & 16  \\
    Mamba & 2 & 8 & 16   \\
    Mamba + S4D & 2 & 8 &  16  \\
   
    \bottomrule
    \end{tabular}
    \caption{Performance of various models on the parity task.}
    \label{tab:parity}
\end{table}

\section{Product of PSD Matrices is non-Negative}\label{app:psd-product}
The proof is based on the following definition and the two following lemmas.
\begin{definition}\label{def:pds}
    (Positive Semi-Definite Matrix)
    An $n\times n$ Matrix $A$ is positive semi-definite (PSD) iff $\forall x\in \mathbb{R^n}: x^TAx\geq0$.
\end{definition}
\begin{lemma}\label{lem:psd_mult}
For two positive semi-definite (PSD) matrices $A$ and $B$, which are also real and symmetric, the product $AB$ has only non-negative values.
\end{lemma}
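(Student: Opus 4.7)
The plan is to first interpret ``non-negative values'' as non-negative eigenvalues, since a product of two symmetric PSD matrices can have strictly negative entries in general (e.g., with $A=\bigl(\begin{smallmatrix}1&-1\\-1&1\end{smallmatrix}\bigr)$ and $B=\bigl(\begin{smallmatrix}1&2\\2&5\end{smallmatrix}\bigr)$), whereas the statement ``$AB$ has non-negative spectrum'' is true, is the classical result, and agrees with what the application actually needs, since the transition matrices it is ultimately invoked on are diagonal and for diagonal matrices the eigenvalues coincide with the entries. Under this reading, I would reduce the lemma to the fact that similar matrices share eigenvalues, by exhibiting $AB$ as similar to a symmetric PSD matrix.

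First I would dispatch the nondegenerate case where $A$ is positive definite. Then the symmetric square root $A^{1/2}$ exists and is invertible, and one can write
\[
AB \;=\; A^{1/2}\bigl(A^{1/2} B A^{1/2}\bigr) A^{-1/2},
\]
which displays $AB$ as similar to $M \coloneqq A^{1/2} B A^{1/2}$. I would then check that $M$ is symmetric (from $(A^{1/2})^{T}=A^{1/2}$) and PSD, via
\[
x^{T} M x \;=\; (A^{1/2} x)^{T}\, B\, (A^{1/2} x) \;\geq\; 0,
\]
so by the spectral theorem $M$ has non-negative eigenvalues, hence so does $AB$.

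To upgrade to arbitrary PSD $A$ (possibly singular), I would use a perturbation argument: apply the previous step to $A_{\varepsilon} \coloneqq A + \varepsilon I$, which is positive definite for every $\varepsilon>0$, and let $\varepsilon \to 0^{+}$. Since the characteristic polynomial $\det(t I - A_{\varepsilon} B)$ has coefficients that are polynomial in $\varepsilon$, its roots depend continuously on $\varepsilon$, so the eigenvalues of $AB$ are limits of non-negative numbers and therefore non-negative. The only real obstacle is making this last continuity step precise; if one wants to avoid invoking continuity of the spectrum, a clean alternative is to decompose $\mathbb{R}^{n} = \ker A \oplus (\ker A)^{\perp}$, note that any vector in $\ker A$ is an eigenvector of $AB$ with eigenvalue $0$, and apply the invertible case to the restriction of $A$ and the compression of $B$ to $(\ker A)^{\perp}$, where $A$ acts as a positive definite operator.
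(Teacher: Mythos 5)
Your main argument is correct and rests on the same key object as the paper's proof --- the symmetric PSD matrix $A^{1/2}BA^{1/2}$ --- but you relate it to $AB$ by a different mechanism. You use similarity, $AB = A^{1/2}\bigl(A^{1/2}BA^{1/2}\bigr)A^{-1/2}$, which forces you to assume $A$ invertible and then patch the singular case separately. The paper instead invokes the cyclic-permutation property of spectra (its Lemma~\ref{lem:eigenvalues}: $ST$ and $TS$ share non-zero eigenvalues, applied with $S=\sqrt{A}$, $T=\sqrt{A}B$), which needs no invertibility and disposes of singular $A$ in one stroke, since the possibly extra zero eigenvalues of $AB$ are harmless. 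Your perturbation argument via $A_\varepsilon = A+\varepsilon I$ and continuity of the roots of the characteristic polynomial is a valid, if heavier, substitute. Your observation that the lemma must be read as a statement about the spectrum rather than the entries is a genuine improvement in precision: your counterexample is right, the paper's phrasing ``has only non-negative values'' is loose, and the reading is consistent with the application, where the matrices are diagonal so entries and eigenvalues coincide.

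One concrete error in your ``clean alternative'' for singular $A$: a vector $v\in\ker A$ is \emph{not} in general an eigenvector of $AB$, since $ABv = A(Bv)$ need not vanish (take $A=\operatorname{diag}(0,1)$ and $B$ the all-ones $2\times 2$ matrix: $ABe_1=(0,1)^T$). What is true is that $v\in\ker A$ gives $BAv=0$, so such $v$ are $0$-eigenvectors of $BA=(AB)^T$, which has the same spectrum as $AB$; alternatively, note that $\operatorname{ran}(A)=(\ker A)^\perp$ is $AB$-invariant and the induced map on the quotient is zero. Either fix restores the argument, but as written that branch does not go through. Since your perturbation route already covers the singular case, this does not invalidate the proof, but the alternative should be corrected or dropped.
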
 
\begin{proof}
The proof is based on the following lemmas.
\begin{lemma}\label{lem:eigenvalues}
   For two square matrices $S$ and $T$, $ST$ has the same non-zero eigenvalues as $TS$.
\begin{proof}
    Let $\lambda$ be an eigenvalue of $ST$ with the corresponding eigenvector $v$: $STv = \lambda v$. Multiplying both sides by $T$, we have $TSTv = \lambda Tv$. Now, redefining $v' = Tv$, the preceding relation can be rewritten as $TSv' = \lambda v'$, which means that $v'$ is  an eigenvector of $TS$ with the same eigenvalue $\lambda$. Since for every non-zero eigenvector $v$ of $ST$, $v'$ can be defined, the set of non-zero eigenvalues will be the same.
\end{proof}
\end{lemma}
\begin{lemma}\label{lem:psd-square-root}
    (Principal Square Root of positive semi-definite (PSD) Matrix) A symmetric real PSD matrix $A$ has a unique square root $B^2=A$ which is also symmetric and PSD. 
\end{lemma}
From~\cref{lem:psd-square-root} the principal square root of $A$ exists; we call it $\sqrt{A}$. We can write $AB = \sqrt{A}\sqrt{A}B = \sqrt{A}(\sqrt{A}B)$. \cref{lem:eigenvalues} proves that $AB$ has the same eigenvalues as $(\sqrt{A}B)\sqrt{A}$. Therefore, if we show that the eigenvalues of $(\sqrt{A}B)\sqrt{A}$ are non-negative, the lemma is proved.
From the symmetry of the principar square root, we have $(\sqrt{A}B)\sqrt{A} = (\sqrt{A}^TB)\sqrt{A}$ and we can show that it is PSD, because $B$ is so. We use ~\cref{def:pds}. 
\begin{align}
x^T(\sqrt{A}^TB)\sqrt{A}x = (\sqrt{A}x)^TB\sqrt{A}x = v^T Bv \geq 0\,,\quad v = \sqrt{A}x
\end{align}
Now, since $(\sqrt{A}B)\sqrt{A} = (\sqrt{A}^TB)\sqrt{A}$ and $(\sqrt{A}^TB)\sqrt{A}$ is PSD, $$(\sqrt{A}B)\sqrt{A}$$ is also PSD with non-negative eignenvalues. Hence, the proof is complete.
\end{proof}
\end{document}